\documentclass[conference,10pt]{IEEEtran}
\IEEEoverridecommandlockouts
\usepackage{amsmath,amsfonts,amsthm,amssymb,bbm}
\usepackage{cite}
\usepackage{balance}
\usepackage{mathrsfs}
\usepackage{xcolor}
\usepackage{tikz}
\usepackage{pgfplots}
\usetikzlibrary{positioning,quotes,angles,patterns,intersections,pgfplots.fillbetween,calc}
\usepackage{tkz-euclide}
\usepackage{url}
\usetikzlibrary{decorations.pathmorphing}
\usepackage{enumitem}

\interdisplaylinepenalty=1500

\theoremstyle{plain} 
\newtheorem{theorem}{Theorem}

\newtheorem{definition}{Definition}

\theoremstyle{definition} \newtheorem{remark}{Remark}
\theoremstyle{definition} 
\usetikzlibrary{calc,shapes.geometric}

\title{Realizing GANs via a Tunable Loss Function}

\author{%
Gowtham R. Kurri, Tyler Sypherd, and Lalitha Sankar \\
 Arizona State University, \texttt{\{gkurri,tsypherd,lsankar\}@asu.edu}\thanks{This paper was presented in part at ITW 2021.} 
%
}
\begin{document}
\maketitle
\begin{abstract}
We introduce a tunable GAN, called $\alpha$-GAN, parameterized by $\alpha \in (0,\infty]$, which interpolates between various $f$-GANs and Integral Probability Metric based GANs (under constrained discriminator set).
We construct $\alpha$-GAN using a supervised loss function, namely, $\alpha$-loss, which is a tunable loss function capturing several canonical losses. 
We show that $\alpha$-GAN is intimately related to the Arimoto divergence, which was first proposed by \"{O}sterriecher (1996), and later studied by Liese and Vajda (2006). We also study the convergence properties of $\alpha$-GAN.
We posit that the holistic understanding that $\alpha$-GAN introduces will have practical benefits of addressing both the issues of vanishing gradients and mode collapse.
\end{abstract}

\section{Introduction}
In \cite{Goodfellow14}, Goodfellow~\emph{et al.} introduced \emph{generative adversarial networks} (GANs), a novel technique for training \emph{generative models} to produce samples from an unknown (true) distribution using a finite number of real samples. 
A GAN involves two learning models (both represented by deep neural networks in practice): a generator model $G$ that takes a random seed in a low-dimensional (relative to the data) \emph{latent} space to generate synthetic samples (by implicitly learning the true distribution without explicit probability models), and a discriminator model $D$ which classifies inputs (from either the true distribution or the generator) as real or fake. The generator wants to fool the discriminator while the discriminator wants to maximize the discrimination power between the true and generated samples. The opposing goals of $G$ and $D$ lead to a zero-sum min-max game in which a chosen value function is minimized and maximized over the model parameters of $G$ and $D$, respectively.


 For the value function considered in \emph{vanilla} GAN\footnote{We refer to the GAN introduced by Goodfellow \emph{et al.}~\cite{Goodfellow14} as \emph{vanilla} GAN, as done in the literature~\cite{lim2017geometric,cai2020utilizing} to distinguish it from others introduced later.}~\cite{Goodfellow14}, when $G$ and $D$ are given enough training time and capacity\footnote{Practical representations of $G$ and $D$, e.g., via neural networks, are able to capture a large class of functions when sufficiently parameterized.}, the min-max game is shown to have a Nash equilibrium leading to the generator minimizing the Jensen-Shannon divergence (JSD) between the true and the generated distributions. {Subsequently}, Nowozin \emph{et al.}\cite{NowozinCT16} showed that the GAN framework can minimize several $f$-divergences, including JSD, leading to $f$-GANs.
 Arguing that vanishing gradients are due to the sensitivity of $f$-divergences to mismatch in distribution supports, Arjovsky \emph{et al.}~\cite{ArjovskyCB17} proposed Wasserstein GAN (WGAN) using a ``weaker" Euclidean distance between distributions.
This has led to a broader class of GANs based on integral probability metric (IPM) distances~\cite{liang2018well}. Yet neither the vanilla GAN nor the IPM GANs perform consistently well in practice due to a variety of issues that arise during training (e.g., \emph{mode collapse, vanishing gradients, oscillatory convergence, to name a few})~\cite{huszar2015not,metz2016unrolled,salimans2016improved,arjovsky2017towards,GulrajaniAADC17}, thus providing even less clarity on how to choose the value function.

In this work, we first formalize a supervised loss function perspective of GANs and propose a tunable $\alpha$-GAN based on $\alpha$-loss, a class of tunable loss functions~\cite{sypherd2019tunable,SypherdDSD20} parameterized by $\alpha\in(0,\infty]$ that captures the well-known exponential loss ($\alpha=1/2$)~\cite{FREUND1997119}, the log-loss ($\alpha=1$)~\cite{MerhavF1998,CourtadeW11}, and the 0-1 loss ($\alpha=\infty$)~\cite{NguyenWJ09,BartlettJM06}.
Ultimately, we find that $\alpha$-GAN reveals a holistic structure in relating several canonical GANs, thereby unifying convergence and performance analyses. 
Our main contributions are as follows: 
\begin{itemize}
    \item We present a unique global Nash equilibrium to the min-max optimization problem induced by the $\alpha$-GAN, provided $G$ and $D$ have sufficiently large capacity and the models can be trained sufficiently long (Theorem~\ref{thm:alpha-GAN}). When the discriminator is trained to optimality (where its strategy under $\alpha$-loss is a tilted distribution), the generator seeks to minimize the \emph{Arimoto divergence} of order $\alpha$ (which has wide applications in statistics and information theory~\cite{LieseV06,osterreicher2003new}) between the true and the generated distributions, thereby providing an operational interpretation to the divergence. 
    We note that our approach differs from Nowozin \textit{et al.} $f$-GAN approach, please see Remark~\ref{remark1} for clarification.
    \item We show that $\alpha$-GAN interpolates between various $f$-GANs including vanilla GAN ($\alpha=1$), Hellinger GAN~\cite{NowozinCT16} ($\alpha=1/2$), Total Variation GAN~\cite{NowozinCT16} ($\alpha=\infty$), and IPM-based GANs including WGANs (when the discriminator set is appropriately constrained) by smoothly tuning the hyperparameter $\alpha$ (see Theorem~\ref{thm:fgans} and \eqref{eqn:IPM}). 
    Thus, $\alpha$-GAN allows a practitioner to determine how much they want to resemble vanilla GAN, for instance, since certain datasets/distributions may favor certain GANs (or even interpolation between certain GANs).
    Analogous to results on $\alpha$-loss in classification\cite{SypherdDSD20,sypherd2021journal}, where the model performance saturates quickly for $\alpha\rightarrow \infty$, we expect a similar saturation for $\alpha$-GAN (see Figure~\ref{fig:plotofdivergence}). Thus, we posit that smooth tuning from JSD to IPM that results from increasing $\alpha$ from $1$ to $\infty$ can address issues like mode collapse, vanishing gradients, etc. 
     \item In Theorem~\ref{prop}, we reconstruct the Arimoto divergence using the margin-based form of $\alpha$-loss~\cite{sypherd2021journal} and the variational formulation of Nguyen \textit{et al.}~\cite{NguyenWJ09}, which sheds more light on the convexity of the generator function of the divergence first proposed by \"{O}sterreicher~\cite{osterreicher1996class}, and later studied by and Liese and Vajda~\cite{LieseV06}.
    \item Finally, we study \emph{convergence} properties of $\alpha$-GAN in the presence of sufficiently large number of samples and discriminator capacity. We show that Arimoto divergences for all $\alpha>0$ are \emph{equivalent} in convergence (Theorem~\ref{thm:equivalenceinconvergence}) generalizing such an equivalence known in the literature~\cite{ArjovskyCB17,liu2017approximation} only for special cases, i.e., for $\alpha=1$ (Jensen-Shannon divergence), $\alpha=1/2$ (squared Hellinger distance), and $\alpha=\infty$ (total variation distance). 
\end{itemize}
The remainder of the paper is organized as follows. We review $\alpha$-loss and background on GANs in Section~\ref{section:preliminaries}. We present the loss function perspective of GANs in Section~\ref{section:lossfnperspective}. We propose and analyze tunable $\alpha$-GAN in Section~\ref{sectopn:tunableGAN}. Also, a connection between Arimoto divergence and the margin-based form of $\alpha$-loss is examined in Section~\ref{section:connections}. Finally, we study convergence properties of $\alpha$-GAN in Section~\ref{section:convergence}.
\section{$\alpha$-loss and GANs}\label{section:preliminaries}
We first review a tunable class of loss functions, $\alpha$-loss, that includes well-studied exponential loss ($\alpha=1/2$), log-loss ($\alpha=1$), and 0-1 loss ($\alpha=\infty$). Then, we present an overview of some related GANs in the literature. 
\begin{definition}[Sypherd \emph{et al.}~\cite{sypherd2021journal}]
\label{def:alphaloss} For a set of distributions $\mathcal{P}(\mathcal{Y})$ over $\mathcal{Y}$, $\alpha$-loss $\ell_{\alpha}:\mathcal{Y} \times \mathcal{P}(\mathcal{Y}) \rightarrow \mathbb{R}_{+}$ for $\alpha \in (0,1) \cup (1,\infty)$ is defined as
\begin{equation} \label{eq:alphaloss_prob}
\ell_{\alpha}(y,\hat{P}) \triangleq \frac{\alpha}{\alpha - 1}\left(1 - \hat{P}(y)^{\frac{\alpha-1}{\alpha}}\right).
\end{equation} 
By continuous extension, $\ell_{1}(y,\hat{P}) \triangleq -\log{\hat{P}(y)}$, $\ell_{\infty}(y,\hat{P}) \triangleq 1 - \hat{P}(y)$, and $\ell_{0}(y,\hat{P})\triangleq\infty$.
\end{definition}
Note that $\ell_{1/2}(y,\hat{P}) = \hat{P}(y)^{-1} - 1$, which is related to the exponential loss, particularly in the margin-based form~\cite{sypherd2021journal}. Also, $\alpha$-loss is convex in the probability term $\hat{P}(y)$.
Regarding the history of~\eqref{eq:alphaloss_prob}, Arimoto first studied $\alpha$-loss in finite-parameter estimation problems~\cite{arimoto1971information}, and later Liao \textit{et al.} used $\alpha$-loss to model the inferential capacity of an adversary to obtain private attributes~\cite{liao2018tunable}.
Most recently, Sypherd \textit{et al.} studied $\alpha$-loss in the machine learning setting~\cite{sypherd2021journal}, which is an impetus for this work.


\subsection{Background on GANs}

Let $P_r$ be a probability distribution over $\mathcal{X}\subset\mathbb{R}^d$, which the generator wants to learn \emph{implicitly} by producing samples by playing a competitive game with a discriminator in an adversarial manner. 
We parameterize the generator $G$ and the discriminator $D$ by vectors $\theta\in\Theta\subset \mathbb{R}^{n_g}$ and $\omega\in\Omega\subset\mathbb{R}^{n_d}$, respectively, and write $G_\theta$ and $D_\omega$ ($\theta$ and $\omega$ are typically the weights of neural network models for the generator and the discriminator, respectively). The generator $G_\theta$ takes as input a $d^\prime(\ll d)$-dimensional latent noise $Z\sim P_Z$ and maps it to a data point in $\mathcal{X}$ via the mapping $z\mapsto G_\theta(z)$. For an input $x\in\mathcal{X}$, the discriminator outputs $D_\omega(x)\in[0,1]$, the probability that $x$ comes from $P_r$ (real) as opposed to $P_{G_\theta}$ (synthetic). The generator and the discriminator play a two-player min-max game with a value function $V(\theta,\omega)$, resulting in a saddle-point optimization problem given by
\begin{align}\label{eqn:GANgeneral}
    \inf_{\theta\in\Theta}\sup_{\omega\in\Omega} V(\theta,\omega). 
\end{align}
Goodfellow \emph{et al.}~\cite{Goodfellow14} introduced a value function
\begin{align}
    &V_\text{VG}(\theta,\omega)\nonumber\\
    &=\mathbb{E}_{X\sim P_r}[\log{D_\omega(X)}]+\mathbb{E}_{Z\sim P_{Z}}[\log{(1-D_\omega(G_\theta(Z)))}]\nonumber\\
    &=\mathbb{E}_{X\sim P_r}[\log{D_\omega(X)}]+\mathbb{E}_{X\sim P_{G_\theta}}[\log{(1-D_\omega(X))}]\label{eq:Goodfellowobj}
\end{align}
and showed that when the discriminator class $\{D_\omega\}$, parametrized by $\omega$, is rich enough, \eqref{eqn:GANgeneral} simplifies to finding the $\inf_{\theta\in\Theta} 2D_{\text{JS}}(P_r||P_{G_\theta})-\log{4}$,
where $D_{\text{JS}}(P_r||P_{G_\theta})$ is the Jensen-Shannon divergence~\cite{Lin91} between $P_r$ and $P_{G_\theta}$. This simplification is achieved, for any $G_\theta$, by choosing the optimal discriminator
    \begin{align}
    D_{\omega^*}(x)=\frac{p_r(x)}{p_r(x)+p_{G_\theta}(x)},
    \end{align}
where $p_r$ and $p_{G_\theta}$ are the corresponding densities of the distributions $P_r$ and $P_{G_\theta}$, respectively, with respect to a base measure $dx$ (e.g., Lebesgue measure).

Generalizing this, Nowozin \emph{et al.}~\cite{NowozinCT16} derived value function
\begin{align}\label{eqn:fGANobj}
    V_f(\theta,\omega)=\mathbb{E}_{X\sim P_r}[D_\omega(X)]+\mathbb{E}_{X\sim P_{G_\theta}}[f^*(D_\omega(X))],
\end{align}
where\footnote{This is a slight abuse of notation in that $D_\omega$ is not a probability here. However, we chose this for consistency in  notation of discriminator across various GANs. 
} $D_\omega:\mathcal{X}\rightarrow \mathbb{R}$ and $f^*(t)\triangleq \sup_u\left\{ut-f(u)\right\}$ is the Fenchel conjugate of a convex lower semincontinuous function $f$,
for any $f$-divergence
$D_f(P_r||P_{G_\theta}):=\int_\mathcal{X}p_{G_\theta}(x)f\left(\frac{p_r(x)}{p_{G_\theta}(x)}\right)dx$~\cite{measures_renyi1961,Csiszar67,Alis66} (not just the Jensen-Shannon divergence) leveraging its variational characterization~\cite{NguyenWJ10}.
In particular, $\sup_{\omega\in\Omega} V_f(\theta,\omega)=D_f(P_r||P_{G_\theta})$ when there exists $\omega^*\in\Omega$ such that $T_{\omega^*}(x)=f^\prime\left(\frac{p_r(x)}{p_{G_\theta}(x)}\right)$. 

Highlighting the problems with the continuity of various $f$-divergences (e.g., Jensen-Shannon, KL, reverse KL, total variation) over the parameter space $\Theta$~\cite{arjovsky2017towards}, Arjovsky \emph{et al.}~\cite{ArjovskyCB17} proposed Wasserstein-GAN (WGAN) using the following Earth Mover's (also called Wasserstein-1) distance:
\begin{align}
    W(P_r,P_{G_\theta})
    =\inf_{\Gamma_{X_1X_2}\in\Pi(P_r,P_{G_\theta})}\mathbb{E}_{(X_1,X_2)\sim \Gamma_{X_1X_2}}\lVert{X_1-X_2}\rVert_2,  
\end{align}
where $\Pi(P_r,P_{G_\theta})$ is the set of all joint distributions $\Gamma_{X_1X_2}$ with marginals $P_r$ and $P_{G_\theta}$. WGAN employs the Kantorovich-Rubinstein duality \cite{villani2008optimal} using the value function
\begin{align}\label{eqn:WGANobj}
   V_\text{WGAN}(\theta,\omega)=\mathbb{E}_{X\sim P_r}[D_\omega(X)]-\mathbb{E}_{X\sim P_{G_\theta}}[D_\omega(X)],
\end{align}
where the functions $D_\omega:\mathcal{X}\rightarrow \mathbb{R}$ are all 1-Lipschitz,
to simplify $\sup_{\omega\in\Omega}V_{\text{WGAN}}(\theta,\omega)$ to $W(P_r,P_{G_\theta})$ when the class $\Omega$ is rich enough. Although, various GANs have been proposed in the literature, each of them exhibits their own strengths and weaknesses in terms of convergence, vanishing gradients, mode collapse, computational complexity, etc. leaving the problem of instability unsolved~\cite{wiatrak2019stabilizing}.

\section{Loss Function Pespective of GANs}\label{section:lossfnperspective}
Noting that a GAN involves a classifier (i.e., discriminator), it is well known that the value function $V_{\text{VG}}(\theta,\omega)$ in \eqref{eq:Goodfellowobj} considered by Goodfellow \emph{et al.}~\cite{Goodfellow14} is related to cross-entropy loss. While perhaps it has not been explicitly articulated heretofore in the literature, we first formalize this loss function perspective of GANs. In \cite{AroraGLMZ17}, Arora \emph{et al.}~observed that the $\log$ function in \eqref{eq:Goodfellowobj} can be replaced by any (monotonically increasing) concave function $\phi(x)$ (e.g., $\phi(x)=x$ for WGANs). More generally, we show that one can write $V(\theta,\omega)$ in terms of \emph{any} classification loss $\ell(y,\hat{y})$ with inputs $y\in\{0,1\}$ (the true label) and $\hat{y}\in[0,1]$ (soft prediction of $y$). For a GAN, we have $(X|y=1)\sim P_r$, $(X|y=0)\sim P_{G_\theta}$, and $\hat{y}=D_\omega(x)$. With this, we define a value function 
\begin{align}
    V(\theta,\omega)&=\mathbb{E}_{X|y=1}[-\ell(y,D_\omega(X))]+\mathbb{E}_{X|y=0}[-\ell(y,D_\omega(X))]\\
    &=\mathbb{E}_{X\sim P_r}[-\ell(1,D_\omega(X))]+\mathbb{E}_{X\sim P_{G_\theta}}[-\ell(0,D_\omega(X))]\label{eqn:lossfnps1}.
\end{align}
For cross-entropy loss, i.e., $\ell_{\text{CE}}(y,\hat{y})\triangleq -y\log{\hat{y}}-(1-y)\log{(1-\hat{y})}$, notice that the expression in \eqref{eqn:lossfnps1} is equal to $V_{\text{VG}}$ in \eqref{eq:Goodfellowobj}.
For the value function in \eqref{eqn:lossfnps1}, we consider a GAN given by the min-max optimization problem:
\begin{align}\label{eqn:lossfnbasedGAN}
    \inf_{\theta\in\Theta}\sup_{\omega\in\Omega}V(\theta,\omega).
\end{align}
Let $\phi(\cdot):=-\ell(1,\cdot)$ and $\psi(\cdot):=-\ell(0,\cdot)$ in the sequel. The functions $\phi$ and $\psi$ are assumed to be monotonically increasing and decreasing functions, respectively, so as to retain the intuitive interpretation of the vanilla GAN (that the discriminator should output high values to real samples and low values to the generated samples). These functions should also satisfy the constraint
\begin{align}\label{eqn:condnonfnsforGAN}
    \phi(t)+\psi(t)\leq \phi({1}/{2})+\psi({1}/{2}),\ \text{for all}\ t\in[0,1], 
\end{align}
so that the optimal discriminator guesses uniformly at random (i.e., outputs a constant value ${1}/{2}$ irrespective of the input) when $P_r=P_{G_\theta}$. A loss function $\ell(y,\hat{y})$ is said to be \emph{symmetric}~\cite{reid2010composite} if $\psi(t)=\phi(1-t)$, for all $t\in[0,1]$. Notice that the GAN considered by Arora \emph{et al.}~\cite[(2)]{AroraGLMZ17} is a specials case of $\eqref{eqn:lossfnbasedGAN}$, in particular, $\eqref{eqn:lossfnbasedGAN}$ recovers the form of GAN in \cite[(2)]{AroraGLMZ17} when the loss function $\ell(y,\hat{y})$ is symmetric. For symmetric losses, concavity of the function $\phi$ is a sufficient condition for satisfying \eqref{eqn:condnonfnsforGAN}, but not a necessary condition.
\section{Tunable $\alpha$-GAN}\label{sectopn:tunableGAN}
In this section, we examine the loss function perspective of GANs by focusing on the GAN obtained by plugging in $\alpha$-loss. We first write $\alpha$-loss in \eqref{eq:alphaloss_prob} in the form of a binary classification loss to obtain
\begin{align}
    \ell_\alpha(y,\hat{y}):=\frac{\alpha}{\alpha-1}\left(1-y\hat{y}^{\frac{\alpha-1}{\alpha}}-(1-y)(1-\hat{y})^{\frac{\alpha-1}{\alpha}}\right),\label{eqn:alphaloss}
\end{align}
 for $\alpha\in(0,1)\cup (1,\infty)$. Note that \eqref{eqn:alphaloss} recovers $\ell_{\text{CE}}$ as $\alpha\rightarrow 1$. Now consider a \emph{tunable $\alpha$-GAN} with a value function 
\begin{align}
    &V_\alpha(\theta,\omega)\nonumber\\
    &=\mathbb{E}_{X\sim P_r}[-\ell_{\alpha}(1,D_\omega(X))]+\mathbb{E}_{X\sim P_{G_\theta}}[-\ell_{\alpha}(0,D_\omega(X))]\nonumber\\
    &=\frac{\alpha}{\alpha-1}\times\nonumber\\
    &\left(\mathbb{E}_{X\sim P_r}\left[D_\omega(X)^{\frac{\alpha-1}{\alpha}}\right]+\mathbb{E}_{X\sim P_{G_\theta}}\left[\left(1-D_\omega(X)\right)^{\frac{\alpha-1}{\alpha}}\right]-2\right)\label{eqn:alphaGANobjective}.
\end{align}
We can verify that $\lim_{\alpha\rightarrow 1}V_\alpha(\theta,\omega)=V_{\text{VG}}(\theta,\omega)$ recovering the value function of the vanilla GAN. Also, notice that 
\begin{align}\label{eqn:IPM}
\lim_{\alpha\rightarrow \infty}V_\alpha(\theta,\omega)=\mathbb{E}_{X\sim P_r}\left[D_\omega(x)\right]-\mathbb{E}_{X\sim P_{G_\theta}}\left[D_\omega(x)\right]-1
\end{align}
is the value function (modulo a constant) used in Intergral Probability Metric (IPM) based GANs\footnote{Note that IPMs do not restrict the function $D_\omega$ to be a probability.}, e.g., WGAN, McGan~\cite{pmlr-v70-mroueh17a}, Fisher GAN~\cite{MrouehS17}, and Sobolev GAN~\cite{mroueh2017sobolev}.
The resulting min-max game in $\alpha$-GAN is given by
\begin{align} 
\inf_{\theta\in\Theta}\sup_{\omega\in\Omega}V_\alpha(\theta,\omega)\label{eqn:minimaxalphaGAN}.
\end{align}
The following theorem provides the min-max solution, i.e., Nash equilibrium, to the two-player game in \eqref{eqn:minimaxalphaGAN} for the non-parametric setting, i.e., when the discriminator set $\Omega$ is large enough.
\begin{theorem}[min-max solution]\label{thm:alpha-GAN}
For a fixed generator $G_\theta$, the discriminator $D_{\omega^*}(x)$ optimizing the $\sup$ in \eqref{eqn:minimaxalphaGAN} is given by
\begin{align}\label{eqn:optimaldoisc}
    D_{\omega^*}(x)=\frac{p_r(x)^\alpha}{p_r(x)^\alpha+p_{G_\theta}(x)^\alpha}.
\end{align}
For this $D_{\omega^*}(x)$, \eqref{eqn:minimaxalphaGAN} simplifies to minimizing a non-negative symmetric $f_\alpha$-divergence $D_{f_\alpha}(\cdot||\cdot)$ as
\begin{align}\label{eqn:inf-obj-alpha}
    \inf_{\theta\in\Theta} D_{f_\alpha}(P_r||P_{G_\theta})+\frac{\alpha}{\alpha-1}\left(2^{\frac{1}{\alpha}}-2\right),
\end{align}
where
\begin{align}\label{eqn:falpha}
f_\alpha(u)=\frac{\alpha}{\alpha-1}\left(\left(1+u^\alpha\right)^{\frac{1}{\alpha}}-(1+u)-2^{\frac{1}{\alpha}}+2\right),
\end{align}
for $u\geq 0$ and\footnote{We note that the divergence $D_{f_\alpha}$ has been referred to as \emph{Arimoto divergence} in the literature~\cite{osterreicher1996class,osterreicher2003new,LieseV06}. We refer the reader to Section~\ref{section:connections} for more details.}
\begin{align}\label{eqn:alpha-divergence}
D_{f_\alpha}(P||Q)=\frac{\alpha}{\alpha-1}\left(\int_\mathcal{X} \left(p(x)^\alpha+q(x)^\alpha\right)^\frac{1}{\alpha} dx-2^{\frac{1}{\alpha}}\right),
\end{align}
which is minimized iff $P_{G_\theta}=P_r$. 
\end{theorem}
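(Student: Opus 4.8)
The plan is to follow the standard two-stage GAN argument: first solve the inner supremum pointwise to obtain $D_{\omega^*}$, then substitute it back and recognize the resulting functional as an $f$-divergence whose generator I verify is convex with a unique zero at $u=1$.

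For the inner maximization I would write $V_\alpha(\theta,\omega)=\int_\mathcal{X}\bigl[p_r(x)\phi(D_\omega(x))+p_{G_\theta}(x)\psi(D_\omega(x))\bigr]dx$ with $\phi(t)=-\ell_\alpha(1,t)$ and $\psi(t)=-\ell_\alpha(0,t)$. Since $\Omega$ is rich enough (the non-parametric regime), the supremum is attained by maximizing the integrand pointwise, i.e.\ maximizing $h(t)=p_r(x)\phi(t)+p_{G_\theta}(x)\psi(t)$ over $t\in[0,1]$ for each $x$. A short computation gives $\phi'(t)=t^{-1/\alpha}$ and $\psi'(t)=-(1-t)^{-1/\alpha}$, so $\phi''(t)=-\tfrac1\alpha t^{-1/\alpha-1}<0$ and $\psi''(t)<0$ for \emph{every} $\alpha>0$; hence $h$ is concave and the stationarity condition $p_r(x)t^{-1/\alpha}=p_{G_\theta}(x)(1-t)^{-1/\alpha}$ gives $t/(1-t)=(p_r/p_{G_\theta})^\alpha$, which is exactly \eqref{eqn:optimaldoisc}. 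Concavity (together with the integrand diverging to $-\infty$ at the relevant boundary when $\alpha<1$) certifies this stationary point as the global maximizer.

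Next I would substitute $D_{\omega^*}$, using $D_{\omega^*}^{(\alpha-1)/\alpha}=p_r^{\alpha-1}(p_r^\alpha+p_{G_\theta}^\alpha)^{-(\alpha-1)/\alpha}$ and the analogous identity for $1-D_{\omega^*}$; the two expectations then combine cleanly into $\int_\mathcal{X}(p_r^\alpha+p_{G_\theta}^\alpha)^{1/\alpha}dx$, so $\sup_\omega V_\alpha=\tfrac{\alpha}{\alpha-1}\bigl(\int(p_r^\alpha+p_{G_\theta}^\alpha)^{1/\alpha}dx-2\bigr)$. Splitting the constant as $-2=-2^{1/\alpha}+(2^{1/\alpha}-2)$ reproduces \eqref{eqn:alpha-divergence} plus the additive constant in \eqref{eqn:inf-obj-alpha}. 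To display this as $\int p_{G_\theta}f_\alpha(p_r/p_{G_\theta})dx$ I would set $u=p_r/p_{G_\theta}$, obtaining integrand $p_{G_\theta}(1+u^\alpha)^{1/\alpha}$, and then add and subtract the affine term $\tfrac{\alpha}{\alpha-1}(1+u)$ (whose integral against $p_{G_\theta}$ equals $\tfrac{\alpha}{\alpha-1}\cdot 2$ by normalization of both densities) plus the constant $\tfrac{\alpha}{\alpha-1}(2-2^{1/\alpha})$; this recovers \eqref{eqn:falpha}, and in particular $f_\alpha(1)=0$.

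The remaining claims reduce to properties of $f_\alpha$. Symmetry is immediate from the manifestly $(p,q)$-symmetric integral \eqref{eqn:alpha-divergence}. For convexity the decisive computation is the second derivative, which I expect to collapse after factoring to $f_\alpha''(u)=\alpha\,u^{\alpha-2}(1+u^\alpha)^{1/\alpha-2}>0$ for all $u>0$. This is the crux and the main obstacle: the prefactor $\tfrac{\alpha}{\alpha-1}$ and the convexity/concavity of $(1+u^\alpha)^{1/\alpha}$ each change sign across $\alpha=1$, and only their product is uniformly positive, so the delicate point is to carry out the sign bookkeeping once and for all rather than splitting into the cases $\alpha>1$ and $\alpha<1$. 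Finally, strict convexity together with $f_\alpha(1)=0$ yields $D_{f_\alpha}\ge 0$ by Jensen's inequality, with equality iff $p_r/p_{G_\theta}$ is $P_{G_\theta}$-almost surely constant; normalization forces that constant to equal $1$, giving $P_{G_\theta}=P_r$ as the unique minimizer.
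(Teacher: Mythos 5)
Your proposal is correct and follows the same two-stage skeleton as the paper's proof: pointwise maximization of the inner objective to obtain \eqref{eqn:optimaldoisc}, substitution to get $\tfrac{\alpha}{\alpha-1}\left(\int_\mathcal{X}\left(p_r(x)^\alpha+p_{G_\theta}(x)^\alpha\right)^{1/\alpha}dx-2\right)$, and the constant split $-2=-2^{1/\alpha}+\left(2^{1/\alpha}-2\right)$ to expose $D_{f_\alpha}$ plus the additive constant. Where you genuinely diverge is in the two places the paper leaves implicit, and both of your completions check out. First, for concavity of the inner objective the paper splits into cases ($\alpha>1$: $y^{(\alpha-1)/\alpha}$ concave; $\alpha<1$: convex but flipped by the negative prefactor $\tfrac{\alpha}{\alpha-1}$), whereas your direct computation $\phi''(t)=-\tfrac{1}{\alpha}t^{-1/\alpha-1}<0$, $\psi''(t)=-\tfrac{1}{\alpha}(1-t)^{-1/\alpha-1}<0$ handles all $\alpha>0$ uniformly --- a small but real simplification. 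Second, the paper's proof simply asserts that $f_\alpha$ is convex and that $D_{f_\alpha}\geq 0$ with equality iff $P_r=P_{G_\theta}$; convexity is delegated to \"{O}sterreicher's second-derivative test \cite{osterreicher1996class} and, in this paper's own narrative, to the margin-based reconstruction in Theorem~\ref{prop}, where convexity follows ``for free'' because $f_\alpha$ is (up to a constant) a negative infimum of affine functions. You instead carry out the second-derivative computation yourself, and your claimed collapse $f_\alpha''(u)=\alpha\,u^{\alpha-2}\left(1+u^\alpha\right)^{1/\alpha-2}>0$ is exactly right (the factor $(\alpha-1)$ from differentiating $u^{\alpha-1}$ cancels the prefactor's denominator, which is the sign bookkeeping you flagged), after which Jensen's inequality with strict convexity and $f_\alpha(1)=0$ gives the ``minimized iff'' claim rigorously. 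So your proof is self-contained where the paper leans on citations; what the paper's division of labor buys instead is a conceptual explanation of \emph{why} $f_\alpha$ is convex and $D_{f_\alpha}$ symmetric (the variational/margin-based structure), rather than a verification that it is.
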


\begin{figure}[htbp] 
    \centerline{\includegraphics[scale=.4]{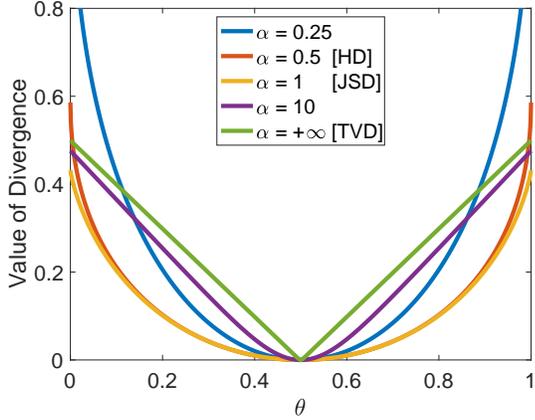}}
    \caption{A plot of $D_{f_{\alpha}}$ in~\eqref{eqn:alpha-divergence} for several values of $\alpha$ where $p \sim \text{Ber}(1/2)$ and $q \sim \text{Ber}(\theta)$. Note that HD, JSD, and TVD, are abbreviations for Hellinger, Jensen-Shannon, and Total Variation divergences, respectively. As $\alpha \rightarrow 0$, the curvature of the divergence increases, placing increasingly more weight on $\theta \neq 1/2$. 
    Conversely, for $\alpha \rightarrow \infty$, $D_{f_{\alpha}}$ quickly resembles $D_{f_{\infty}}$, hence a saturation effect of $D_{f_{\alpha}}$. 
    }
    \label{fig:plotofdivergence}
\end{figure}

\begin{remark}\label{remark1}
It can be inferred from \eqref{eqn:inf-obj-alpha} that when the discriminator is trained to optimality, the generator has to minimize the $f_\alpha$-divergence hinting at an application of $f$-GAN instead. Implementing $f_\alpha$-GAN directly via value function in~\eqref{eqn:fGANobj} (for $f_\alpha$) involves finding convex conjugate of $f_\alpha$, which is challenging in terms of computational complexity making it inconvenient for optimization in the training phase of GANs. In contrast, our approach of using supervised losses circumvents this tedious effort \emph{and} also provides an operational interpretation of $f_\alpha$-divergence via losses.
A related work where an $f$-divergence (in particular, $\alpha$-divergence~\cite{Amari1985}) shows up in the context of GANs, even when the problem formulation is not via $f$-GAN, is by Cai \emph{et al.}~\cite{cai2020utilizing}. 
However, our work differs from~\cite{cai2020utilizing} in that the value function we use is well motivated via supervised loss functions of binary classification and also recovers the basic GAN~\cite{Goodfellow14} (among others). 
\end{remark}

\begin{remark}
As $\alpha \rightarrow 0$, note that~\eqref{eqn:optimaldoisc} implies a more cautious discriminator, i.e., if $p_{G_{\theta}}(x) \geq p_{r}(x)$, then $D_{w^{*}}(x)$ decays more slowly from $1/2$, and if $p_{G_{\theta}}(x) \leq p_{r}(x)$, $D_{w^{*}}(x)$ increases more slowly from $1/2$. 
Conversely, as $\alpha\rightarrow\infty$,~\eqref{eqn:optimaldoisc} simplifies to $D_{\omega^*}(x)=\mathbbm{1}\{p_r(x)>p_{G_\theta}(x)\}+\frac{1}{2}\mathbbm{1}\{p_r(x)=p_{G_\theta}(x)\}$, where the discriminator implements the Maximum Likelihood (ML) decision rule, i.e., a hard decision whenever $p_r(x)\neq p_{G_\theta}(x)$. In other words,~\eqref{eqn:optimaldoisc} for $\alpha \rightarrow \infty$ induces a very confident discriminator.
Regarding the generator's perspective,~\eqref{eqn:inf-obj-alpha} (and Figure~\ref{fig:plotofdivergence}) implies that the generator seeks to minimize the discrepancy between $P_{r}$ and $P_{G_{\theta}}$
according to the geometry induced by $D_{f_\alpha}$.
Thus, the optimization trajectory traversed by the generator during training is strongly dependent on the practitioner's choice of $\alpha \in (0,\infty]$. 
Please refer to Figure~\ref{fig:simplexillustration} for an illustration of this observation. 
%
\end{remark}
\begin{figure}[h]
\centering




\begin{tikzpicture}
 \node[draw,very thick,regular polygon,regular polygon sides=3,minimum width=7cm](ternary){};
\draw[blue, thick,->] (-.75,1) arc (96:16:2.75) node[rotate=-37] at (5:.875) {\scriptsize \textcolor{blue}{$\inf\limits_{\theta\in\Theta} D_{f_{\alpha_{1}}}(P_{r}||P_{G_{\theta}})$}};
\draw[red, thick,->] (-.75,1) arc (190:270:2.72) node[rotate=-39] at (95:-.87) {\scriptsize \textcolor{red}{$\inf\limits_{\theta\in\Theta} D_{f_{\alpha_{2}}}(P_{r}||P_{G_{\theta}})$}};
\filldraw [black] (2.2,-1.25) circle (2pt) node[anchor=north west] {$P_{r}$};
\filldraw [black] (-.75,1) circle (2pt) node[anchor=south] {$P_{G_{\theta}}$};
\end{tikzpicture}
\caption{An idealized illustration on the probability simplex of the infimum over $\theta$ in~\eqref{eqn:inf-obj-alpha} for $\alpha_{1},\alpha_{2} \in (0,\infty]$ such that $\alpha_{1} \neq \alpha_{2}$. The choice of $\alpha$ in the min-max game for the $\alpha$-GAN in~\eqref{eqn:minimaxalphaGAN} defines the optimization trajectory taken by the generator (versus an optimal discriminator as specified in~\eqref{eqn:optimaldoisc}) by distorting the underlying geometry according to $D_{f_{\alpha}}$. 
}
    \label{fig:simplexillustration}
\end{figure}
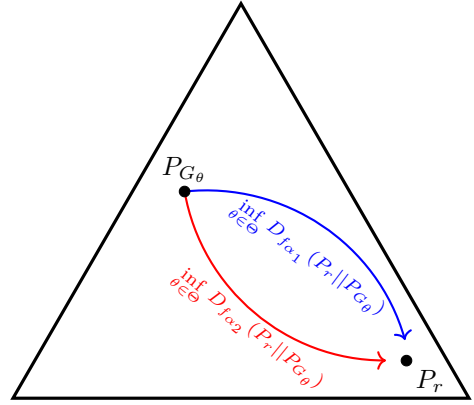
A detailed proof of Theorem~\ref{thm:alpha-GAN} is in Appendix~\ref{proofofthm1}. For intuition on the construction of the function in \eqref{eqn:falpha}, see Theorem~\ref{prop}. Next we show that $\alpha$-GAN recovers various well known $f$-GANs.
\begin{theorem}[$f$-GANs]\label{thm:fgans}
$\alpha$-GAN recovers vanilla GAN, Hellinger GAN (H-GAN)~\cite{NowozinCT16}, and Total Variation GAN (TV-GAN)~\cite{NowozinCT16} as $\alpha\rightarrow 1$, $\alpha=\frac{1}{2}$, and $\alpha\rightarrow \infty$, respectively.
\end{theorem}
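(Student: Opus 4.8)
The plan is to show that the tunable $\alpha$-GAN, via the divergence $D_{f_\alpha}$ from Theorem~\ref{thm:alpha-GAN}, coincides with three known $f$-GANs at the special values $\alpha \to 1$, $\alpha = 1/2$, $\alpha \to \infty$. Since Theorem~\ref{thm:alpha-GAN} already establishes that optimizing $\alpha$-GAN amounts to minimizing $D_{f_\alpha}(P_r \| P_{G_\theta})$ (up to an additive constant), it suffices to verify that the divergence $D_{f_\alpha}$ reduces to the specific $f$-divergence minimized by each target GAN. So the entire proof reduces to three computations with the closed form in~\eqref{eqn:alpha-divergence}, or equivalently the generator $f_\alpha$ in~\eqref{eqn:falpha}.

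First I would handle $\alpha \to 1$. The value-function excerpt already notes $\lim_{\alpha\to 1} V_\alpha = V_{\text{VG}}$ and that $\ell_\alpha \to \ell_{\text{CE}}$; to be self-contained I would take the limit of $D_{f_\alpha}$ directly. Writing $D_{f_\alpha}(P\|Q) = \frac{\alpha}{\alpha-1}\bigl(\int (p^\alpha+q^\alpha)^{1/\alpha}\,dx - 2^{1/\alpha}\bigr)$, I would set $\alpha = 1+\epsilon$ and expand to first order in $\epsilon$ using $(p^\alpha+q^\alpha)^{1/\alpha} = (p+q)\bigl(1 + \epsilon\cdot(\text{correction}) + o(\epsilon)\bigr)$, with the correction term built from $p\log p + q\log q$ against $(p+q)\log(p+q)$; the prefactor $\frac{\alpha}{\alpha-1} = \frac{1+\epsilon}{\epsilon}$ converts the $O(\epsilon)$ term into a finite limit. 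The surviving expression should organize into $2D_{\text{JS}}(P\|Q) - \log 4$ (matching the constant already recorded after~\eqref{eq:Goodfellowobj}), confirming recovery of vanilla GAN.

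Next, for $\alpha = 1/2$ the computation is purely algebraic and the cleanest of the three: substituting into~\eqref{eqn:alpha-divergence} gives $D_{f_{1/2}}(P\|Q) = -\bigl(\int (\sqrt{p}+\sqrt{q})^2\,dx - 4\bigr) = -\int(p + q + 2\sqrt{pq})\,dx + 4 = 2 - 2\int\sqrt{pq}\,dx = \int(\sqrt{p}-\sqrt{q})^2\,dx$, which is exactly twice the squared Hellinger distance $H^2(P,Q)$, so $\alpha$-GAN recovers the Hellinger GAN of~\cite{NowozinCT16}. For $\alpha \to \infty$ I would use $\lim_{\alpha\to\infty}(p^\alpha + q^\alpha)^{1/\alpha} = \max(p,q)$ together with $\lim_{\alpha\to\infty}\frac{\alpha}{\alpha-1} = 1$ and $2^{1/\alpha}\to 1$, giving $\int \max(p,q)\,dx - 1 = \int \max(p,q)\,dx - \tfrac12\int(p+q)\,dx = \tfrac12\int|p-q|\,dx$, the total variation distance, recovering TV-GAN.

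\textbf{The main obstacle} is the $\alpha \to 1$ limit: it is a genuine $\tfrac{0}{0}$ indeterminate form requiring a careful second-order-in-disguise asymptotic expansion of $(p^\alpha+q^\alpha)^{1/\alpha}$ inside the integral, and I would need to justify that the integrand's expansion can be integrated term-by-term (dominated convergence or an integrability hypothesis on $p\log p$, $q\log q$) so that the pointwise limit yields the claimed $D_{\text{JS}}$ form. The other two cases are either elementary algebra ($\alpha=1/2$) or a clean limit with uniform control of $\max(p,q)$ ($\alpha\to\infty$), so the technical weight of the proof sits almost entirely in making the logarithmic Taylor expansion rigorous and matching it to the Jensen–Shannon divergence and its $-\log 4$ offset.
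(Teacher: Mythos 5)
Your proposal is correct and follows essentially the same route as the paper: both reduce the theorem, via Theorem~\ref{thm:alpha-GAN}, to evaluating $D_{f_\alpha}$ at the three values of $\alpha$, with your Taylor expansion in $\epsilon=\alpha-1$ playing exactly the role of the paper's L'H\^{o}pital computation (the paper resolves the $0/0$ form by rewriting $2^{1/\alpha}$ as $2^{1/\alpha-1}\int(p_r+p_{G_\theta})\,dx$ so the limit can be taken inside the integral), and your $\alpha=1/2$ and $\alpha\to\infty$ computations matching the paper's verbatim. One bookkeeping correction: $\lim_{\alpha\to 1}D_{f_\alpha}(P\|Q)=2D_{\text{JS}}(P\|Q)$ exactly, with no $-\log 4$ term; that constant enters the generator's objective only through the separate additive term $\frac{\alpha}{\alpha-1}\left(2^{\frac{1}{\alpha}}-2\right)\to-\log 4$ in \eqref{eqn:inf-obj-alpha}, so your expansion of $D_{f_\alpha}$ alone should not be expected to produce it.
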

A detailed proof is in Appendix~\ref{proofoftheorem2}.
\subsection{Reconstructing Arimoto Divergence}\label{section:connections}
%
It is interesting to note that the divergence $D_{f_\alpha}(\cdot||\cdot)$ (in \eqref{eqn:alpha-divergence}) that naturally emerges from the analysis of $\alpha$-GAN was first proposed by \"{O}sterriecher~\cite{osterreicher1996class} in the context of statistics and was later referred to as the \emph{Arimoto divergence} by Liese and Vajda~\cite{LieseV06}. It was shown to have several desirable properties with applications in statistics and information theory~\cite{cerone2004bound,vajda2009metric}. For example:
\begin{itemize}
\item A geometric interpretation of the divergence $D_{f_\alpha}$ in the context of hypothesis testing~\cite{osterreicher1996class}.
\item $D_{f_\alpha}(P||Q)^{\min\{\alpha,\frac{1}{2}\}}$ defines a distance metric (satisfying the triangle inequality) on the set of probability distributions~\cite{osterreicher2003new}. 
\end{itemize}
When the Arimoto divergence $D_{f_\alpha}$ was proposed, the convexity of the generating function $f_\alpha$ was proved via the traditional second derivative test~\cite[Lemma~1]{osterreicher1996class}.
We present an alternative approach to arriving at the Arimoto divergence by utilizing the margin-based\footnote{In the binary classification context, the margin is represented by $t := yf(x)$, where $x \in \mathcal{X}$ is the feature vector, $y \in \{-1,+1\}$ is the label, and $f: \mathcal{X} \rightarrow \mathbb{R}$ is the prediction function produced by a learning algorithm.} form of $\alpha$-loss (see~\cite{sypherd2021journal}) where the convexity of $f_\alpha$ (and also the symmetric property of $D_{f_\alpha}(\cdot||\cdot)$) arises in a rather natural manner, thereby reconstructing the Arimoto divergence through a distinct conceptual perspective. 

We do this by noticing that the Arimoto divergence falls into the category of a broad class of $f$-divergences that can be obtained from margin-based loss functions. {Such a connection between margin-based losses in classification and the corresponding $f$-divergences was introduced} 
by Nguyen \emph{et al.}~\cite[Theorem~1]{NguyenWJ09}. They observed that, for a given margin-based loss function $\tilde{\ell}$, there is a corresponding $f$-divergence with the convex function $f$ defined as $f(u):=-\inf_t\left(u\tilde{\ell}(t)+\tilde{\ell}(-t)\right)$. The convexity of $f$ follows simply because the infimum of affine functions is concave, and this argument does not require $\tilde{\ell}$ to be convex\footnote{in fact $\alpha$-loss in its margin-based form is only quasi-convex for $\alpha>1$}. Additionally, the $f$-divergence obtained is always symmetric because $f$ satisfies $f(u)=uf(\frac{1}{u})$ since
   $\inf_tu\tilde{\ell}(t)+\tilde{\ell}(-t)= \inf_t\tilde{\ell}(t)+u\tilde{\ell}(-t)$. 

The margin-based $\alpha$-loss~\cite{sypherd2019tunable} for $\alpha\in(0,1)\cup (1,\infty)$, $\tilde{\ell}_\alpha:\bar{\mathbb{R}}\rightarrow \mathbb{R}_+$ is defined as
\begin{align}
    \tilde{\ell}_\alpha(t)\triangleq \frac{\alpha}{\alpha-1}\left(1-{\sigma(t)}^{\frac{\alpha-1}{\alpha}}\right),
\end{align}
where $\sigma:\bar{\mathbb{R}}\rightarrow \mathbb{R}_+$ is the sigmoid function given by $\sigma(t)=(1+\mathrm{e}^{-t})^{-1}$. With these preliminaries in hand, we have the following result.
\begin{theorem}\label{prop}
For the function $f_\alpha$ in \eqref{eqn:falpha}, it holds that
\begin{multline}\label{eqn:thm3stat}
    f_\alpha(u)=-\inf_t\left(u\tilde{\ell}_\alpha(t)+\tilde{\ell}_\alpha(-t)\right)-\frac{\alpha}{\alpha-1}\left(2^{\frac{1}{\alpha}}-2\right)\\
  \text{for}\ u\geq 0.
\end{multline}
\end{theorem}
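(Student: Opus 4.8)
The plan is to compute the infimum appearing on the right-hand side of \eqref{eqn:thm3stat} in closed form and match it term-by-term to $f_\alpha$ in \eqref{eqn:falpha}. First I would trade the optimization over $t\in\bar{\mathbb{R}}$ for one over $s\in(0,1)$ via the substitution $s=\sigma(t)$, using that $\sigma$ is a strictly increasing bijection and that $\sigma(-t)=1-\sigma(t)$. Writing $\beta:=\frac{\alpha-1}{\alpha}$, the objective becomes
\begin{align*}
 u\tilde{\ell}_\alpha(t)+\tilde{\ell}_\alpha(-t)=\frac{\alpha}{\alpha-1}\left(u+1-u\,s^{\beta}-(1-s)^{\beta}\right)=:\tilde{g}(s),
\end{align*}
so that $\inf_t\big(u\tilde{\ell}_\alpha(t)+\tilde{\ell}_\alpha(-t)\big)=\inf_{s\in(0,1)}\tilde{g}(s)$.

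Next I would locate the stationary point. Differentiating, $\tilde{g}'(s)=0$ reduces to $u\,s^{\beta-1}=(1-s)^{\beta-1}$, and since $\beta-1=-1/\alpha$ this rearranges to $u^\alpha=s/(1-s)$, i.e.\ $s^\star=\frac{u^\alpha}{1+u^\alpha}$ (which pleasingly coincides with the optimal discriminator value $D_{\omega^*}$ from \eqref{eqn:optimaldoisc} evaluated at likelihood ratio $u$). Substituting $s^\star$ and using $\alpha\beta=\alpha-1$ together with $1-\beta=1/\alpha$ collapses the two power terms into $u\,(s^\star)^\beta+(1-s^\star)^\beta=(1+u^\alpha)^{1/\alpha}$, yielding
\begin{align*}
 \tilde{g}(s^\star)=\frac{\alpha}{\alpha-1}\left(u+1-(1+u^\alpha)^{1/\alpha}\right).
\end{align*}

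To be sure this stationary point is the global infimum rather than a spurious critical point, I would verify convexity of $\tilde{g}$. The key simplification is the identity $\frac{\alpha}{\alpha-1}\cdot\beta=1$, from which $\tilde{g}''(s)=\frac{1}{\alpha}\left(u\,s^{\beta-2}+(1-s)^{\beta-2}\right)>0$ for every $s\in(0,1)$ and every $\alpha\in(0,1)\cup(1,\infty)$. This is the step I expect to be the main obstacle in spirit: the prefactor $\frac{\alpha}{\alpha-1}$ and the exponent $\beta$ each flip sign as $\alpha$ crosses $1$, so a naive second-derivative test would force an awkward case split between $0<\alpha<1$ and $\alpha>1$. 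The cancellation $\frac{\alpha}{\alpha-1}\beta=1$ dissolves both sign issues simultaneously and delivers convexity uniformly, so $s^\star$ is the unique global minimizer and $\tilde{g}(s^\star)$ is the sought infimum.

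Finally I would substitute $\tilde{g}(s^\star)$ into the right-hand side of \eqref{eqn:thm3stat}, that is, form $-\tilde{g}(s^\star)-\frac{\alpha}{\alpha-1}\big(2^{1/\alpha}-2\big)$, and collect terms; the $(1+u^\alpha)^{1/\alpha}$, $(1+u)$, and constant pieces line up exactly with \eqref{eqn:falpha}, establishing the claim. A short remark would dispatch the boundary value $u=0$, where $s^\star\to 0$ so the infimum is approached rather than attained, yet the evaluated expression still agrees with $f_\alpha(0)$.
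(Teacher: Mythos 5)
Your proposal is correct, but it takes a more self-contained route than the paper. The paper's proof is essentially a two-step reduction: it invokes Corollary~1 of~\cite{sypherd2019tunable}, which states that $\inf_t \big(\eta\tilde{\ell}_\alpha(t)+(1-\eta)\tilde{\ell}_\alpha(-t)\big)=\frac{\alpha}{\alpha-1}\big(1-(\eta^\alpha+(1-\eta)^\alpha)^{1/\alpha}\big)$ for $\eta\in[0,1]$, then rescales by $1/(1-\eta)$ and substitutes $u=\eta/(1-\eta)$ to land on \eqref{eqn:thm3stat}. You instead prove that optimization result from scratch: reparametrizing via $s=\sigma(t)$, solving the first-order condition to get $s^\star=u^\alpha/(1+u^\alpha)$, and certifying globality through the uniform convexity identity $\tilde{g}''(s)=\frac{1}{\alpha}\big(u\,s^{\beta-2}+(1-s)^{\beta-2}\big)>0$. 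Your computation is exactly the content hiding inside the cited corollary, so the underlying mathematics coincides; what your version buys is self-containedness, an explicit demonstration that the cancellation $\frac{\alpha}{\alpha-1}\beta=1$ avoids the case split at $\alpha=1$ (a split the paper does perform in its proof of Theorem~\ref{thm:alpha-GAN}), and the pleasant observation that the minimizer $s^\star$ is precisely the optimal discriminator \eqref{eqn:optimaldoisc} evaluated at likelihood ratio $u$ — which makes the link between the margin-based and discriminator-based viewpoints explicit. What the paper's version buys is brevity and a clean separation of the classification-theoretic ingredient (the cited corollary) from the change of variables. One trivial quibble: since $\tilde{\ell}_\alpha$ is defined on $\bar{\mathbb{R}}$, the infimum at $u=0$ is actually attained at $t=-\infty$ (i.e., $s=0$), so your closing remark about the infimum being only approached is unnecessary, though harmless.
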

A detailed proof is in Appendix~\ref{proofoftheorem6}.

\section{Convergence Properties of $\alpha$-GAN}\label{section:convergence}
We study \emph{convergence} properties of $\alpha$-GAN under the assumption of sufficiently large number of samples and discriminator capacity (similar to \cite{liu2017approximation}). Liu~\emph{et al.}~\cite{liu2017approximation} addressed a fundamental question in the context of convergence analysis of GANs, in general: For a sequence of generated distributions, does convergence of a divergence between the generated distribution and a fixed real distribution (that the generator wants to minimize) to the global minimum lead to some standard notion of distributional convergence to the real distribution? They answer this question in the affirmative provided $\mathcal{X}$ is a compact metric space. They define \emph{adversarial divergences}~\cite[Definition~1]{liu2017approximation} which capture the divergences used by a number of existing GANs that include vanilla GAN~\cite{Goodfellow14}, $f$-GAN~\cite{NowozinCT16}, WGAN~\cite{ArjovskyCB17}, and MMD-GAN~\cite{dziugaite2015training}. For \emph{strict} adversarial divergences (a subclass of the adversarial divergences where the minimizer of the divergence is uniquely the real distribution), they showed that convergence of divergence to global minimum implies weak convergence of the generated distribution to the real distribution. They also obtain a structural result on the class of strict adversarial divergences~\cite[Corollary~12]{liu2017approximation} based on a notion of \emph{relative strength} between adversarial divergences.  

We first borrow the following terminology from Liu \emph{et al.}~\cite{liu2017approximation} in order to study convergence properties of $\alpha$-GAN: A \emph{strict adversarial divergence} $\tau_1$ is said to be stronger than another adversarial divergence $\tau_2$ (or $\tau_2$ is said to be weaker than $\tau_1$) if for any sequence of probability distributions $(P_n)$ and target distribution $P$, $\tau_1(P\|P_n)\rightarrow 0$ implies $\tau_2(P\|P_n)\rightarrow 0$. We say $\tau_1$ is equivalent to $\tau_2$ is $\tau_1$ is both stronger and weaker than $\tau_2$. We say $\tau_1$ is strictly stronger than $\tau_2$ if $\tau_1$ is stronger than $\tau_2$ but not equivalent. We say $\tau_1$ and $\tau_2$ are not comparable if $\tau_1$ is neither stronger nor weaker than $\tau_2$.

Arjovsky \emph{et al.}~\cite{ArjovskyCB17} proved that Jensen-Shannon divergence is equivalent to total variation distance. Later Liu \emph{et al.} showed that squared Hellinger distance is also equivalent to both these divergences, meaning that all the three divergences belong to the same equivalence class (see \cite[Figure~1]{liu2017approximation}). Noticing that squared Hellinger distance, Jensen-Shannon divergence, and total variation distance correspond to Arimoto divergences $D_{f_\alpha}(\cdot||\cdot)$ for $\alpha=1/2$, $\alpha=1$, and $\alpha=\infty$, respectively, it is natural to ask the question: Are Arimoto divergences for all $\alpha>0$ equivalent? We answer this question in the affirmative in Theorem~\ref{thm:equivalenceinconvergence}, thereby adding the Arimoto divergences for all other $\alpha$ also to the same equivalence class. As a motivation for a proof of this, we first give an alternative and more simpler proof for the equivalence of Jensen-Shannon divergence and total variation distance ~\cite[Theorem 2(1)]{ArjovskyCB17}.
\begin{theorem}
The Jensen-Shannon divergence is equivalent to the total variation distance.
\end{theorem}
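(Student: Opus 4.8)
The plan is to establish the equivalence by sandwiching the Jensen--Shannon divergence between two functions of the total variation distance, from which both directions of the convergence equivalence follow at once. Writing $d_{\text{TV}}(P,Q)=\tfrac12\int_{\mathcal X}|p-q|\,dx$ for the total variation distance and $M=\tfrac12(P+Q)$ for the mixture with density $m=\tfrac12(p+q)$, I would prove the two bounds
\begin{align}
\tfrac12\, d_{\text{TV}}(P,Q)^2 \;\le\; D_{\text{JS}}(P\|Q)\;\le\; (\log 2)\, d_{\text{TV}}(P,Q). \label{eqn:sandwich}
\end{align}
Given \eqref{eqn:sandwich}, the conclusion is immediate: if $D_{\text{JS}}(P\|P_n)\to 0$ then the left inequality forces $d_{\text{TV}}(P,P_n)\to 0$, so JSD is stronger than TV; and if $d_{\text{TV}}(P,P_n)\to 0$ then the right inequality forces $D_{\text{JS}}(P\|P_n)\to 0$, so TV is stronger than JSD. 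Since both quantities are strict adversarial divergences (each vanishes iff the two distributions coincide), this yields equivalence in the sense borrowed from Liu \emph{et al.}

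For the lower bound I would start from the defining decomposition $D_{\text{JS}}(P\|Q)=\tfrac12 D_{\text{KL}}(P\|M)+\tfrac12 D_{\text{KL}}(Q\|M)$ and apply Pinsker's inequality $D_{\text{KL}}(P\|M)\ge 2\,d_{\text{TV}}(P,M)^2$ to each term. The key simplification is the elementary identity $d_{\text{TV}}(P,M)=d_{\text{TV}}(Q,M)=\tfrac12 d_{\text{TV}}(P,Q)$, which holds because $|p-m|=|q-m|=\tfrac12|p-q|$ pointwise; substituting into the two Pinsker bounds yields the claimed $\tfrac12 d_{\text{TV}}^2$ lower bound directly.

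The main obstacle is the upper bound, because in general a Kullback--Leibler divergence cannot be controlled by total variation at all ($D_{\text{KL}}$ may be infinite while $d_{\text{TV}}$ is arbitrarily small). What rescues the argument here is that the second argument of each KL term is the mixture $M$, so the likelihood ratios satisfy $p/m\le 2$ and $q/m\le 2$ everywhere; the logarithms are thereby bounded and the divergence becomes comparable to an $L^1$ quantity. Concretely, I would realize $D_{\text{JS}}$ as the $f$-divergence $D_{\text{JS}}(P\|Q)=\int_{\mathcal X} q\, f_{\text{JS}}(p/q)\,dx$ with $f_{\text{JS}}(u)=\tfrac12\big(u\log\tfrac{2u}{u+1}+\log\tfrac{2}{u+1}\big)$, and since $(\log 2)\,d_{\text{TV}}(P,Q)=\tfrac{\log 2}{2}\int_{\mathcal X} q\,|p/q-1|\,dx$, reduce the upper bound to the single scalar estimate $f_{\text{JS}}(u)\le \tfrac{\log 2}{2}\,|u-1|$ for all $u\ge 0$, with equality at $u=0$ and $u=1$. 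This one-variable inequality I would verify by an elementary analysis: computing $f_{\text{JS}}'(u)=\tfrac12\log\tfrac{2u}{u+1}$ shows $f_{\text{JS}}'$ is increasing and bounded above by $\tfrac{\log 2}{2}$, which makes the difference between the affine majorant and $f_{\text{JS}}$ monotone on $[1,\infty)$ and concave on $[0,1]$, forcing nonnegativity from the matching endpoint values. This is the only genuinely computational part of the argument, and it is exactly the ingredient I expect to generalize when passing to the Arimoto divergences $D_{f_\alpha}$ for arbitrary $\alpha>0$.
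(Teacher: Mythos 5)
Your proof is correct, and it is more self-contained than the paper's, which handles both directions by citation. For the direction ``TV stronger than JSD,'' the paper invokes exactly the inequality $D_{\text{JS}}(P\|Q)\leq(\log 2)\,D_{\text{TV}}(P\|Q)$ but cites it from Lin, whereas you prove it from scratch via the generator-function estimate $f_{\text{JS}}(u)\leq\tfrac{\log 2}{2}|u-1|$ (your calculus argument checks out: $f_{\text{JS}}'(u)=\tfrac12\log\tfrac{2u}{u+1}<\tfrac{\log2}{2}$ gives monotonicity of the gap on $[1,\infty)$, and concavity of the gap with matching zeros at $u=0,1$ handles $[0,1]$). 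For the direction ``JSD stronger than TV,'' the paper defers entirely to Arjovsky \emph{et al.}, whose argument goes through the mixture $M$ via the triangle inequality plus Pinsker and yields $D_{\text{JS}}\geq\tfrac14 D_{\text{TV}}^2$; your route exploits the exact pointwise identity $|p-m|=\tfrac12|p-q|$, so $D_{\text{TV}}(P,M)=\tfrac12 D_{\text{TV}}(P,Q)$, and a single application of Pinsker to each KL term gives the cleaner (and slightly sharper) bound $D_{\text{JS}}\geq\tfrac12 D_{\text{TV}}^2$ with no triangle inequality. What your approach buys beyond self-containedness is that your sandwich
\begin{equation*}
\tfrac12\,D_{\text{TV}}(P\|Q)^2\;\leq\;D_{\text{JS}}(P\|Q)\;\leq\;(\log 2)\,D_{\text{TV}}(P\|Q)
\end{equation*}
is precisely the template the paper uses later for Theorem~\ref{thm:equivalenceinconvergence}: there the \"{O}sterreicher--Vajda bounds $\psi_\alpha(D_{\text{TV}}(P\|Q))\leq D_{f_\alpha}(P\|Q)\leq\psi_\alpha(1)D_{\text{TV}}(P\|Q)$ play the role of your two inequalities, so the generalization to all Arimoto divergences that you anticipate at the end is exactly what happens---the only difference being that the paper inverts the lower envelope $\psi_\alpha$ by a monotonicity/compactness argument rather than through an explicit Pinsker-type quadratic bound. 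One minor point worth noting if you write this up fully: in the $f$-divergence representation $\int q\,f_{\text{JS}}(p/q)\,dx$ you should fix the usual convention on the set $\{q=0\}$ (the contribution is $p\cdot\lim_{u\to\infty}f_{\text{JS}}(u)/u=\tfrac{\log2}{2}p$, which still respects your scalar bound with equality), but this is a routine technicality, not a gap.
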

\begin{proof}
We first show that the total variation distance is stronger than the Jensen-Shannon divergence. Consider a sequence of probability distributions $(P_n)$ such that $D_{\text{TV}}(P_n||P)\rightarrow 0$. Using the fact that the total variation distance upper bounds the Jensen-Shannon divergence~\cite[Theorem 3]{Lin91}, we have $D_{\text{JS}}(P_n||P)\leq (\log{2})D_{\text{TV}}(P_n||P)$, for each $n\in\mathbbm{N}$. This implies that $D_{\text{JS}}(P_n||P)\rightarrow 0$ since $D_{\text{TV}}(P_n||P)\rightarrow 0$. This greatly simplifies the corresponding proof of \cite[Theorem 2(1)]{ArjovskyCB17} which uses measure-theoretic analysis, in particular, the Radon-Nikodym theorem. The proof for the other direction, i.e., the Jensen-Shannon divergence is stronger than the total variation distance, is exactly along the same lines as that of \cite[Theorem~2(1)]{ArjovskyCB17} using triangular and Pinsker's inequalities.   
\end{proof}

\begin{theorem}\label{thm:equivalenceinconvergence}
Arimoto divergences for all $\alpha>0$ are equivalent. That is, for a sequence of probability distributions $(P_n)$, $D_{f_{\alpha_1}}(P_n||P)\rightarrow 0$ if and only if $D_{f_{\alpha_2}}(P_n||P)\rightarrow 0$ for any $\alpha_1\neq \alpha_2$. 
\end{theorem}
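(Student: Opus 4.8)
The plan is to prove that every Arimoto divergence with finite $\alpha\in(0,\infty)$ is equivalent (in the sense just defined) to the total variation distance $D_{\text{TV}}=D_{f_\infty}$; since equivalence of divergences is clearly transitive, this immediately yields $D_{f_{\alpha_1}}\equiv D_{f_{\alpha_2}}$ for every pair $\alpha_1\neq\alpha_2$, placing all of them in a single equivalence class. Throughout I would work at the level of densities: writing $M=\max(p,q)$, $m=\min(p,q)$, $t=m/M\in[0,1]$, and $D_{\text{TV}}(P\|Q)=\tfrac12\int_{\mathcal X}|p-q|\,dx$, one has $\int_{\mathcal X}M\,dx=1+D_{\text{TV}}(P\|Q)$ and $D_{\text{TV}}(P\|Q)=\tfrac12\int_{\mathcal X}M(1-t)\,dx$. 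Substituting $(p^\alpha+q^\alpha)^{1/\alpha}=M(1+t^\alpha)^{1/\alpha}$ into \eqref{eqn:alpha-divergence} and using these identities, I would first reduce $D_{f_\alpha}$ to the single clean expression
\begin{equation*}
D_{f_\alpha}(P\|Q)=\frac{\alpha}{\alpha-1}\left(2^{1/\alpha}\,D_{\text{TV}}(P\|Q)-\int_{\mathcal X}M\,\chi_\alpha(t)\,dx\right),
\end{equation*}
where $\chi_\alpha(t):=2^{1/\alpha}-(1+t^\alpha)^{1/\alpha}\geq0$ on $[0,1]$ with $\chi_\alpha(1)=0$. This identity localizes the whole problem to the one-variable function $\chi_\alpha$ and isolates the sign of $\tfrac{\alpha}{\alpha-1}$, which forces the case split $\alpha>1$ versus $\alpha<1$.

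For the direction $D_{\text{TV}}\to0\Rightarrow D_{f_\alpha}\to0$ (that is, $D_{\text{TV}}$ stronger than $D_{f_\alpha}$) I would establish the linear upper bound $D_{f_\alpha}(P\|Q)\leq A_\alpha\,D_{\text{TV}}(P\|Q)$, the analogue of $D_{\text{JS}}\leq(\log2)D_{\text{TV}}$ used just above. For $\alpha>1$ this is immediate from the displayed identity, since $\chi_\alpha\geq0$ lets one simply drop the integral; for $\alpha<1$ it follows from the elementary pointwise bound $\chi_\alpha(t)\leq K_\alpha(1-t)$ (the ratio $\chi_\alpha(t)/(1-t)$ is continuous on $[0,1)$ with a finite limit at $t=1$), after which $\int_{\mathcal X}M\chi_\alpha\leq 2K_\alpha D_{\text{TV}}$.

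The hard part will be the reverse direction, $D_{f_\alpha}\to0\Rightarrow D_{\text{TV}}\to0$, which plays the role of the Pinsker step. Here I would prove a quadratic lower bound $D_{f_\alpha}(P\|Q)\geq B_\alpha\,D_{\text{TV}}(P\|Q)^2$. The subtlety is that the linear parts cancel: setting $\Delta_\alpha(t):=\chi_\alpha(t)-2^{1/\alpha-1}(1-t)$, one checks $\Delta_\alpha(1)=\Delta_\alpha'(1)=0$, so $\Delta_\alpha$ vanishes to \emph{second} order at $t=1$ (this is exactly why the naive linear bound degenerates, and why at $\alpha=1$ the gap collapses entirely and the Jensen--Shannon divergence emerges only at the next order). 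The key pointwise estimate is therefore $|\Delta_\alpha(t)|\geq c_\alpha(1-t)^2$ on $[0,1]$, obtained by showing $\Delta_\alpha$ has constant sign on $[0,1)$ (matching the sign of $\tfrac{\alpha}{\alpha-1}$, so that $D_{f_\alpha}=\tfrac{\alpha}{|\alpha-1|}\int_{\mathcal X}M\,|\Delta_\alpha(t)|\,dx\geq0$) with a genuine double zero only at $t=1$, i.e.\ $\Delta_\alpha''(1)\neq0$ for $\alpha\neq1$. Integrating and applying Cauchy--Schwarz, $\int_{\mathcal X}M(1-t)^2\,dx\geq\left(\int_{\mathcal X}M(1-t)\,dx\right)^2/\int_{\mathcal X}M\,dx=4D_{\text{TV}}^2/(1+D_{\text{TV}})\geq2D_{\text{TV}}^2$, which delivers the claimed bound.

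Combining the two bounds gives $D_{f_\alpha}(P_n\|P)\to0\iff D_{\text{TV}}(P_n\|P)\to0$ for every finite $\alpha>0$, and transitivity then proves the theorem. I expect the main obstacle to be the pointwise curvature estimate $|\Delta_\alpha(t)|\geq c_\alpha(1-t)^2$---specifically verifying that $\Delta_\alpha$ retains a constant sign with no interior zero and that $\Delta_\alpha''(1)$ is nonzero for all $\alpha\neq1$---together with carrying the $\alpha<1$ and $\alpha>1$ cases in parallel, since both the prefactor sign and the direction of the underlying Minkowski inequality reverse across $\alpha=1$.
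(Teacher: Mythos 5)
Your proposal is correct, but it takes a genuinely different route from the paper. The paper outsources the analytic content to a citation: it quotes \"{O}sterreicher--Vajda's two-sided bound $\psi_\alpha(D_{\text{TV}}(P\|Q))\leq D_{f_\alpha}(P\|Q)\leq \psi_\alpha(1)\,D_{\text{TV}}(P\|Q)$, gets the ``TV stronger'' direction immediately from the upper bound, and gets the ``Arimoto stronger'' direction by a soft topological argument---$\psi_\alpha$ is strictly increasing hence invertible, and $\psi_\alpha^{-1}$ is continuous (via a compactness/Heine--Borel argument), so $\psi_\alpha(D_{\text{TV}}(P_n\|P))\to 0$ forces $D_{\text{TV}}(P_n\|P)\to 0$. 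You instead prove both bounds from scratch: your identity $D_{f_\alpha}(P\|Q)=\tfrac{\alpha}{\alpha-1}\bigl(2^{1/\alpha}D_{\text{TV}}(P\|Q)-\int_{\mathcal X} M\chi_\alpha(t)\,dx\bigr)$ checks out (it follows from $(p^\alpha+q^\alpha)^{1/\alpha}=M(1+t^\alpha)^{1/\alpha}$ and $\int_{\mathcal X} M\,dx=1+D_{\text{TV}}$), the linear upper bound works in both cases, and the Pinsker-type lower bound $D_{f_\alpha}\geq B_\alpha D_{\text{TV}}^2$ is sound: $\Delta_\alpha$ keeps a strict constant sign on $[0,1)$ by the strict power-mean inequality applied to the pair $\{1,t\}$, one computes $\Delta_\alpha''(1)=(1-\alpha)2^{1/\alpha-2}\neq 0$ for $\alpha\neq 1$, so $|\Delta_\alpha(t)|/(1-t)^2$ extends to a positive continuous function on $[0,1]$ and is bounded below, and Cauchy--Schwarz finishes. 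In effect you re-derive a quantitative version of the very bounds the paper cites (their lower bound $\psi_\alpha(D_{\text{TV}})$ is itself quadratic near zero), which is what your approach buys: explicit two-sided rates $B_\alpha D_{\text{TV}}^2\leq D_{f_\alpha}\leq A_\alpha D_{\text{TV}}$ rather than bare equivalence, a self-contained argument parallel to the classical JSD/TV proof, and no appeal to inverse-function continuity; the cost is the casework across $\alpha<1$ versus $\alpha>1$ and the curvature computation the paper avoids by citation. One small slip in wording: $\Delta_\alpha$ has sign \emph{opposite} to $\tfrac{\alpha}{\alpha-1}$ (for $\alpha>1$ the power-mean inequality gives $\Delta_\alpha\leq 0$), not matching it---otherwise your own display $D_{f_\alpha}=\tfrac{\alpha}{|\alpha-1|}\int_{\mathcal X} M|\Delta_\alpha(t)|\,dx\geq 0$ would be violated; the display itself is the correct statement and is what the argument actually uses.
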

A detailed proof is in Appendix~\ref{proofoftheorem4}.

\section{Conclusion}
We have shown that a classical information-theoretic measure (Arimoto divergence) characterizes the ideal performance of a modern machine learning algorithm ($\alpha$-GAN) which interpolates between several canonical GANs.
For future work, we will investigate $\alpha$-GAN in practice, with particular interest in its \emph{generalization} guarantees and its efficacy to reduce \textit{mode collapse}.

\appendices
\section{Proof of Theorem~\ref{thm:alpha-GAN}}\label{proofofthm1}
For a fixed generator, $G_\theta$, we first solve the optimization problem
\begin{align}
   \sup_{\omega\in\Omega}\int_\mathcal{X}\frac{\alpha}{\alpha-1}\left(p_r(x)D_\omega(x)^{\frac{\alpha-1}{\alpha}}+p_{G_\theta}(x)(1-D_\omega(x))^{\frac{\alpha-1}{\alpha}}\right).
\end{align}
Consider the function
\begin{align}
    g(y)=\frac{\alpha}{\alpha-1}\left(ay^{\frac{\alpha-1}{\alpha}}+b(1-y)^{\frac{\alpha-1}{\alpha}}\right),
\end{align}
for $a,b\in\mathbb{R}_+$ and $y\in[0,1]$. To show that the optimal discriminator is given by the expression in \eqref{eqn:optimaldoisc}, it suffices to show that $g(y)$ achieves its maximum in $[0,1]$ at $y^*=\frac{a^\alpha}{a^\alpha+b^\alpha}$. Notice that for $\alpha>1$, $y^{\frac{\alpha-1}{\alpha}}$ is a concave function of $y$, meaning the function $g$ is concave. For $0<\alpha<1$, $y^{\frac{\alpha-1}{\alpha}}$ is a convex function of $y$, but since $\frac{\alpha}{\alpha-1}$ is negative, the overall function $g$ is again concave. Consider the derivative 
    $g^\prime(y^*)=0$,
which gives us
\begin{align}
    y^*=\frac{a^\alpha}{a^\alpha+b^\alpha}.
\end{align}
This gives \eqref{eqn:optimaldoisc}. With this, the optimization problem in \eqref{eqn:minimaxalphaGAN} can be written as $\inf_{\theta\in\Theta}C(G_\theta)$,
where
\begin{align}
   &C(G_\theta)=\frac{\alpha}{\alpha-1}\times\nonumber\\
   &\left[\int_\mathcal{X}\left(p_r(x)D_{\omega^*}(x)^{\frac{\alpha-1}{\alpha}}+p_{G_\theta}(x)(1-D_{\omega^*}(x))^{\frac{\alpha-1}{\alpha}}\right)dx-2\right]\\
   &=\frac{\alpha}{\alpha-1}\Bigg[\int_\mathcal{X}\Bigg(p_r(x)\left( \frac{p_r(x)^\alpha}{p_r(x)^\alpha+p_{G_\theta}(x)^\alpha}\right)^{\frac{\alpha-1}{\alpha}}+\nonumber\\
   &\hspace{12pt}p_{G_\theta}(x)\left( \frac{p_r(x)^\alpha}{p_r(x)^\alpha+p_{G_\theta}(x)^\alpha}\right)^{\frac{\alpha-1}{\alpha}}\Bigg)dx-2\Bigg]\\
    &=\frac{\alpha}{\alpha-1}\left(\int_{\mathcal{X}}\left(p_r(x)^\alpha+p_{G_\theta}(x)^\alpha\right)^{\frac{1}{\alpha}}dx-2\right)\\
    &=D_{f_\alpha}(P_r||P_{G_\theta})+\frac{\alpha}{\alpha-1}\left(2^{\frac{1}{\alpha}}-2\right),
\end{align}
where for the convex function $f_\alpha$ in \eqref{eqn:falpha},
\begin{align}
    &D_{f_\alpha}(P_r||P_{G_\theta})=\int_\mathcal{X} p_{G_\theta}(x)f_\alpha\left(\frac{p_r(x)}{p_{G_\theta}(x)}\right) dx\\
    &=\frac{\alpha}{\alpha-1}\left(\int_{\mathcal{X}}\left(p_r(x)^\alpha+p_{G_\theta}(x)^\alpha\right)^{\frac{1}{\alpha}}dx-2^{\frac{1}{\alpha}}\right).
\end{align}
This gives us \eqref{eqn:inf-obj-alpha}. Since $D_{f_\alpha}(P_r||P_{G_\theta})\geq 0$ with equality if and only if $P_r=P_{G_\theta}$, we have $C(G_\theta)\geq \frac{\alpha}{\alpha-1}\left(2^{\frac{1}{\alpha}}-2\right)$ with equality if and only if $P_r=P_{G_\theta}$.
\balance
\section{Proof of Theorem~\ref{thm:fgans}}\label{proofoftheorem2}
First, using L'H\^{o}pital's rule we can verify that, for $a,b>0$,
\begin{multline}
\lim_{\alpha\rightarrow 1}\frac{\alpha}{\alpha-1}\left(\left(a^\alpha+b^\alpha\right)^{\frac{1}{\alpha}}-2^{\frac{1}{\alpha}-1}(a+b)\right)\\
=a\log{\left(\frac{a}{\frac{a+b}{2}}\right)}+b\log{\left(\frac{b}{\frac{a+b}{2}}\right)}.
\end{multline}
Using this, we have
\begin{align}
&D_{f_1}(P_r||P_{G_\theta})\triangleq\lim_{\alpha\rightarrow 1}D_{f_\alpha}(P_r||P_{G_\theta})\\
&=\lim_{\alpha\rightarrow 1}\frac{\alpha}{\alpha-1}\left(\int_\mathcal{X}\left(p_r(x)^\alpha+p_{G_\theta}(x)^\alpha\right)^{\frac{1}{\alpha}}dx-2^{\frac{1}{\alpha}}\right)\\
&=\lim_{\alpha\rightarrow 1}\Bigg[\frac{\alpha}{\alpha-1}\times\nonumber\\
&\int_\mathcal{X}\left(\left(p_r(x)^\alpha+p_{G_\theta}(x)^\alpha\right)^{\frac{1}{\alpha}}-2^{\frac{1}{\alpha}-1}(p_r(x)+p_{G_\theta}(x))\right)dx\Bigg]\\
&=\int_{\mathcal{X}}p_r(x)\log{\frac{p_r(x)}{\left(\frac{p_r(x)+p_{G_\theta}(x)}{2}\right)}}dx+\nonumber\\
&\hspace{12pt}\int_{\mathcal{X}}p_{G_\theta}(x)\log{\frac{p_{G_\theta}(x)}{\left(\frac{p_r(x)+p_{G_\theta}(x)}{2}\right)}}dx\\
&=:2D_{\text{JS}}(P_r||P_{G_\theta}),
\end{align}
where $D_{\text{JS}}(\cdot||\cdot)$ is the Jensen-Shannon divergence.
Now, as $\alpha\rightarrow 1$, \eqref{eqn:inf-obj-alpha} equals $\inf_{\theta\in\Theta}2D_{\text{JS}}(P_r||P_{G_\theta})-\log{4}$ recovering vanilla GAN.

Substituting $\alpha=\frac{1}{2}$ in \eqref{eqn:alpha-divergence}, we get
\begin{align}
    D_{f_{\frac{1}{2}}}(P_r||P_{G_\theta})&=-\int_\mathcal{X}\left(\sqrt{p_r(x)}+\sqrt{p_{G_\theta}(x)}\right)^2dx+4\\
    &=\int_{\mathcal{X}}\left(\sqrt{p_r(x)}-\sqrt{p_{G_\theta}(x)}\right)^2dx\\
    &=:2D_{\text{H}^2}(P_r||P_{G_\theta}),
\end{align}
where $D_{\text{H}^2}(P_r||P_{G_\theta})$ is the squared Hellinger distance. For $\alpha=\frac{1}{2}$, \eqref{eqn:inf-obj-alpha} gives $2\inf_{\theta\in\Theta}D_{\text{H}^2}(P_r||P_{G_\theta})-2$ recovering Hellinger GAN (up to a constant). 

Noticing that, for $a,b>0$, $\lim_{\alpha\rightarrow \infty}\left(a^\alpha+b^\alpha\right)^{\frac{1}{\alpha}}=\max\{a,b\}$ and defining $\mathcal{A}:=\{x\in\mathcal{X}:p_r(x)\geq p_{G_\theta}(x)\}$, we have
\begin{align}
&D_{f_1}(P_r||P_{G_\theta})\triangleq \lim_{\alpha\rightarrow\infty}D_{f_\alpha}(P_r||P_{G_\theta})\\
&=\lim_{\alpha\rightarrow\infty}\frac{\alpha}{\alpha-1}\left(\int_\mathcal{X}\left(p_r(x)^\alpha+p_{G_\theta}(x)^\alpha\right)^{\frac{1}{\alpha}}dx-2^{\frac{1}{\alpha}}\right)\\
&=\int_\mathcal{X}\max\{p_r(x),p_{G_\theta}(x)\}\ dx-1\\
&=\int_{\mathcal{X}}\max\{p_r(x)-p_{G_\theta}(x),0\}\ dx\\
&=\int_{\mathcal{A}}(p_r(x)-p_{G_\theta}(x))\ dx\\
&=\int_{\mathcal{A}}\frac{p_r(x)-p_{G_\theta}(x)}{2}\ dx+\int_{\mathcal{A}^c}\frac{p_{G_\theta}(x)-p_r(x)}{2}\ dx\\
&=\frac{1}{2}\int_{\mathcal{X}}\left|p_r(x)-p_{G_\theta}(x)\right|\ dx\\
&=:D_{\text{TV}}(P_r||P_{G_\theta}),
\end{align}
where $D_{\text{TV}}(P_r||P_{G_\theta})$ is the total variation distance between $P_r$ and $P_{G_\theta}$. Thus, as $\alpha\rightarrow\infty$, \eqref{eqn:inf-obj-alpha} equals $\inf_{\theta\in\Theta}D_{\text{TV}}(P_r||P_{G_\theta})-1$ recovering TV-GAN (modulo a constant).
\section{Proof of Theorem~\ref{thm:equivalenceinconvergence}}\label{proofoftheorem4}
Consider a sequence of probability distributions $(P_n)$. To prove the theorem, notice that it suffices to show that Arimoto divergence for any $\alpha>0$ is equivalent to the total variation distance, i.e., $D_{f_\alpha}(P_n||P)\rightarrow 0$ if and only if $D_{\text{TV}}(P_n||P)\rightarrow 0$. To this end, we employ a property of Arimoto divergence which gives lower and upper bounds on it in terms of the total variation distance. In particular, \"{O}sterreicher and Vajda~\cite[Theorem~2]{osterreicher2003new} proved that for any $\alpha>0$, probability distributions $P$ and $Q$, we have
\begin{align}\label{eqn:boundsonArimoto}
    \psi_\alpha(D_{\text{TV}}(P||Q))\leq D_{f_\alpha}(P||Q)\leq \psi_\alpha(1)D_{\text{TV}}(P||Q),
\end{align}
where the function $\psi_\alpha:[0,1]\rightarrow \mathcal{R}$ defined by $\psi_\alpha(p)=\frac{\alpha}{\alpha-1}\left(\left(\left(1+p\right)^\alpha+\left(1-p\right)^\alpha\right)^\frac{1}{\alpha}-2^{\frac{1}{\alpha}}\right)$ for $\alpha\in(0,1)\cup(1,\infty)$ is convex and strictly monotone increasing such that $\psi_\alpha(0)=0$ and $\psi_\alpha(1)=\frac{\alpha}{\alpha-1}\left(2-2^\frac{1}{\alpha}\right)$. 

We first prove the `only if' part, i.e., Arimoto divergence is stronger than the total variation distance. Suppose $D_{f_\alpha}(P_n||P)\rightarrow 0$. From the lower bound in \eqref{eqn:boundsonArimoto}, it follows that $\psi_\alpha(D_{\text{TV}}(P_n||P))\leq D_{f_\alpha}(P_n||P)$, for each $n\in\mathbbm{N}$. This implies that $\psi_\alpha(D_{\text{TV}}(P_n||P))\rightarrow 0$. We show below that $\psi_\alpha$ is invertible and $\psi_\alpha^{-1}$ is continuous. Then it would follow that $\psi_\alpha^{-1}\psi_\alpha(D_{\text{TV}}(P_n||P))=D_{\text{TV}}(P_n||P)\rightarrow \psi_\alpha^{-1}(0)=0$ proving that Arimoto divergence is stronger than the total variation distance. It remains to show that $\psi_\alpha$ is invertible and $\psi_\alpha^{-1}$ is continuous. Invertibility follows directly from the fact that $\psi_\alpha$ is strictly monotone increasing function. For the continuity of $\psi_\alpha^{-1}$, it suffices to show that $\psi_\alpha(C)$ is closed for a closed set $C\subseteq [0,1]$. The closed set $C$ is compact since a closed subset of a compact set ($[0,1]$ in this case) is also compact. Note that convexity of $\psi_\alpha$ implies continuity and $\psi_\alpha(C)$ is compact since a continuous function of a compact set is also compact. By Heine-Borel theorem, this gives that $\psi_\alpha(C)$ is closed (and bounded) as desired.

For the `if part', i.e., to prove that the total variation distance is stronger than Arimoto divergence, consider a sequence of probability distributions $(P_n)$ such that $D_{\text{TV}}(P_n||P)\rightarrow 0$. It follows from the upper bound in $\eqref{eqn:boundsonArimoto}$ that $D_{f_\alpha}(P_n||P)\leq D_{\text{TV}}(P_n||P)$, for each $n\in\mathbbm{N}$. This implies that $D_{f_\alpha}(P_n||P)\rightarrow 0$ which completes the proof. 
\section{Proof of Theorem~\ref{prop}}\label{proofoftheorem6}
We know from \cite[Corollary 1]{sypherd2019tunable} that for $\eta\in[0,1]$,
\begin{align*}
    \inf_t \eta\tilde{\ell}_\alpha(t)+(1-\eta)\tilde{\ell}_\alpha(-t)&=\frac{\alpha}{\alpha-1}\left(1-\left(\eta^\alpha+(1-\eta)^\alpha\right)^{\frac{1}{\alpha}}\right).
\end{align*}
This implies that
\begin{multline}\label{eqn:thm31}
\inf_t \frac{\eta}{1-\eta}\tilde{\ell}_\alpha(t)+\tilde{\ell}_\alpha(-t)\\
=\frac{\alpha}{\alpha-1}\left(1+\frac{\eta}{1-\eta}-\left(\left(\frac{\eta}{1-\eta}\right)^\alpha+1\right)^{\frac{1}{\alpha}}\right)
\end{multline}
Now substituting $u$ for $\frac{\eta}{1-\eta}$ and taking negation in \eqref{eqn:thm31}, we get
\begin{multline}
     -\inf_t u\tilde{\ell}_\alpha(t)+\tilde{\ell}_\alpha(-t)=\frac{\alpha}{\alpha-1}\left(\left(u^\alpha+1\right)^{\frac{1}{\alpha}}-(1+u)\right),\\
     \text{for}\ u\geq 0
\end{multline}
giving us \eqref{eqn:thm3stat}.
\bibliographystyle{IEEEtran}
\bibliography{Bibliography}
\end{document}